\newcommand{\bi}{\begin{itemize}}
\newcommand{\ei}{\end{itemize}}
\newcommand{\be}{\begin{enumerate}}
\newcommand{\ee}{\end{enumerate}}
\def\Rset{\mathbb{R}}
\newcommand{\nys}{Nystr\"{o}m}
\newcommand{\ignore}[1]{}
\providecommand{\norm}[1]{\lVert#1\rVert}
\providecommand{\pinv}[1]{#1^{+}}
\newcommand{\colspace}{@{\hspace{.1cm}}}
\newcommand{\tl}{\widetilde}
\newcommand{\mat}[1]{{\mathbf #1}}
\DeclareMathOperator*{\rank}{\mathrm rank}
\newcommand{\K}{\mat{K}}
\renewcommand{\P}{\mat{P}}
\newcommand{\T}{\mat{T}}
\newcommand{\X}{\mat{X}}
\newcommand{\U}{\mat{U}}
\newcommand{\V}{\mat{V}}
\newcommand{\W}{\mat{W}}
\newcommand{\I}{\mat{I}}
\newcommand{\mSigma}{\mat{\Sigma}}
\newcommand{\x}{\mat{x}}
\renewcommand{\v}{\mat{v}}
\newcommand{\z}{\mat{z}}
\newcommand{\evec}{\mat{e}}
\newcommand{\0}{\mat{0}}
\newtheorem{theorem}{Theorem}
\newtheorem{lemma}{Lemma}
\newtheorem{corollary}{Corollary}
\newtheorem{definition}{Definition}
\newcommand{\ipsfig}[2]{\scalebox{#1}{\psfig{#2}}}
\author{Mehryar Mohri\\
Courant Institute and Google Research\\
New York, NY\\
\texttt{mohri@cs.nyu.edu}
\and
Ameet Talwalkar\\
University of California, Berkeley\\
Berkeley, CA\\
\texttt{ameet@eecs.berkeley.edu}}
\title{\textbf{On the Estimation of Coherence}}
\date{}
\begin{document}
\maketitle

\begin{abstract}
Low-rank matrix approximations are often used to help scale standard machine
learning algorithms to large-scale problems.  Recently, matrix coherence has
been used to characterize the ability to extract global information from a
subset of matrix entries in the context of these low-rank approximations and
other sampling-based algorithms, e.g., matrix completion, robust PCA.  Since
coherence is defined in terms of the singular vectors of a matrix and is
expensive to compute, the practical significance of these results largely
hinges on the following question: \emph{Can we efficiently and accurately
estimate the coherence of a matrix?} In this paper we address this question.
We propose a novel algorithm for estimating coherence from a small number of
columns, formally analyze its behavior, and derive a new coherence-based matrix
approximation bound based on this analysis. We then present extensive
experimental results on synthetic and real datasets that corroborate our
worst-case theoretical analysis, yet provide strong support for the use of our
proposed algorithm whenever low-rank approximation is being considered.  Our
algorithm efficiently and accurately estimates matrix coherence across a wide
range of datasets, and these coherence estimates are excellent predictors of
the effectiveness of sampling-based matrix approximation on a case-by-case
basis.
\end{abstract}

\section{Introduction}
Large-scale datasets are becoming more and more prevalent for problems in a
variety of areas, e.g., computer vision, natural language processing,
computational biology. However, several standard methods in machine learning,
such as spectral clustering, manifold learning techniques, kernel ridge
regression or other kernel-based algorithms do not scale to such orders of
magnitude. For large datasets, these algorithms would require storage and
operation on matrices with thousands to millions of columns and rows, which is
especially problematic since these matrices are often not sparse.  An
attractive solution to such problems involves efficiently generating low-rank
approximations to the original matrix of interest.  In particular,
sampling-based techniques that operate on a subset of the columns of the matrix
can be effective solutions to this problem, and have been widely studied within
the machine learning and theoretical computer science communities
\citep{Frieze98,Drineas06,Williams00,Kumar09b}.  In the context of kernel
matrices, the \nys\ method \citep{Williams00} has been shown to work
particularly well in practice for various applications ranging from manifold
learning to image segmentation \citep{Fowlkes04,Talwalkar08}.  

A crucial assumption of these algorithms involves their sampling-based nature,
namely that an accurate low-rank approximation of some matrix $\X \in \Rset^{n
\times m}$ can be generated exclusively from information extracted from a small
subset ($l \ll m$) of its columns.  This assumption is not generally true for
all matrices, and explains the negative results of \citet{Fergus09}.  For
instance, consider the extreme case:
\begin{equation} 
\label{eq:bad_example}
\X = \left[ \begin{array}{cccccc}
\Big | & & \Big | & \Big | &  & \Big | \\ 
\evec_1 & \ldots & \evec_r & \0 & \ldots & \0 \\
\Big | & & \Big | & \Big | &  & \Big | \\ 
\end{array} \right],
\end{equation}
where $\evec_i$ is the $i$th column of the $n$ dimensional identity matrix and
$\0$ is the $n$ dimensional  zero vector.  Although this matrix has rank $r$,
it cannot be well approximated by a random subset of $l$ columns unless this
subset includes $\evec_1, \ldots, \evec_r$. In order to account for such
pathological cases, previous theoretical bounds relied on sampling columns of
$\X$ in an adaptive fashion \citep{Smola00,Bach05,Deshpande06,Kumar09b} or from
non-uniform distributions derived from properties of $\X$
\citep{Drineas05,Drineas06}. Indeed, these bounds give better guarantees for
pathological cases, but are often quite loose nonetheless, e.g., when dealing
with kernel matrices using RBF kernels, and these sampling schemes are rarely
utilized in practice.

More recently, \citet{Talwalkar10} used the notion of \emph{coherence} to
characterize the ability to extract information from a small subset of columns,
showing theoretical and empirical evidence that coherence is tied to the
performance of the \nys\ method. Coherence measures the extent to which the
singular vectors of a matrix are correlated with the standard basis.
Intuitively, if the dominant singular vectors of a matrix are incoherent, then
the subspace spanned by these singular vectors is likely to be captured by a
random subset of sampled columns of the matrix. In fact, coherence-based
analysis of algorithms has been an active field of research, starting with
pioneering work on compressed sensing \citep{Donoho06,Candes06}, as well as
related work on matrix completion \citep{candes08,Keshavan09b} and robust
principle component analysis \citep{candes09}. 

In \citet{candes08}, the use of coherence is motivated by results showing that
several classes of randomly generated matrices have low coherence with high
probability, one of which is the class of matrices generated from uniform
random orthonormal singular vectors and arbitrary singular values.
Unfortunately, these results do not help a practitioner compute coherence on a
case-by-case basis to determine whether attractive theoretical bounds hold for
the task at hand. Furthermore, the coherence of a matrix is by definition
derived from its singular vectors and is thus expensive to compute (the
prohibitive cost of calculating singular values and singular vectors is
precisely the motivation behind sampling-based techniques).  Hence, in spite of
the numerous theoretical work based on related notions of coherence, the
practical significance of these results largely hinges on the following open
question: \emph{Can we efficiently and accurately estimate the coherence of a
matrix?}

In this paper we address this question by presenting a novel algorithm for
estimating matrix coherence from a small number of columns. The remainder of
this paper is organized as follows. Section \ref{sec:prelim} introduces basic
definitions, and provides a brief background on low-rank matrix approximation
and matrix coherence.  In Section \ref{sec:algorithm} we introduce our
sampling-based algorithm to estimate matrix coherence.  We then formally
analyze its behavior in Section \ref{sec:theory}, and also use this analysis to
derive a novel coherence-based bound for matrix projection reconstruction via
Column-sampling (defined in Section \ref{ssec:low_rank}).  Finally, in Section
\ref{sec:experiments} we present extensive experimental results on synthetic
and real datasets.  These results corroborate our worst-case theoretical
analysis, yet provide strong support for the use of our proposed algorithm
whenever sampling-based matrix approximation is being considered.  Empirically,
our algorithm effectively estimates matrix coherence across a wide range of
datasets, and these coherence estimates are excellent predictors of the
effectiveness of sampling-based matrix approximation on a case-by-case basis.
 
\section{Background}
\subsection{Notation}
\label{sec:prelim}
Let $\X \in \Rset^{n \times m}$ be an arbitrary matrix.  We define $\X^{(j)},
\, j = 1 \ldots m$, as the $j$th column vector of $\X$, $\X_{(i)}, \, i = 1
\ldots n$, as the $i$th row vector of $\X$ and $\X_{ij}$ as the $ij$th entry of
$\X$. We denote by $\norm{\X}_F$ the Frobenius norm of $\X$ and by $\norm{\v}$
the $l_2$ norm of the vector $\v$.  If $\rank(\X) = r$, we can write the thin
Singular Value Decomposition (SVD) as $\X = \U_{X} \mSigma_{X} \V_{X}^\top$.
$\mSigma_X$ is diagonal and contains the singular values of $\X$ sorted in
decreasing order, i.e., $\sigma_1(\X) \ge \sigma_2(\X) \ge \ldots \ge
\sigma_r(\X)$. $\U_X \in \Rset^{n \times r}$ and $\V_X \in \Rset^{m \times r}$
have orthogonal columns that contain the left and right singular vectors of
$\X$ corresponding to its singular values.  We define $\P_X = \U_X \U_X^\top$
as the orthogonal projection matrix onto the column space of $\X$, and denote
the projection onto its orthogonal complement as $\P_{X,\perp} = \I - \P_X$.
We further define $\pinv{\X} \in \Rset^{m\times n}$ as the Moore-Penrose
pseudoinverse of $\X$, with $\pinv{\X} =\V_{X} \pinv{\mSigma_{X}} \U_{X}^\top$.
Finally, we will define $\K \in \Rset^{n \times n}$ as a symmetric positive
semidefinite (SPSD) matrix with $\rank(\K) =r \le n$, i.e. a symmetric matrix
with non-negative eigenvalues. 

\subsection{Low-rank matrix approximation}
\label{ssec:low_rank}

Starting with an $n \times m$ matrix, $\X$, we are interested in algorithms
that generate a low-rank approximation, $\tl \X$, from a sample of $l \ll n$ of
its columns.\ignore{\footnote{Sampling is an important issue discussed in
several previous work, e.g., \citet{Talwalkar10b}.}} The accuracy of this
approximation is often measured using the Frobenius or Spectral distance, i.e.,
$\norm{\X - \tl \X}_{\{2,F\}}$. We next briefly describe two of the most common
algorithms of this form, the Column-sampling and the \nys\ methods.

The Column-sampling method generates approximations to arbitrary rectangular
matrices. We first sample $l$ columns of $\X$ such that $\X = \begin{bmatrix}
\X_1 & \X_2\\ \end{bmatrix}$, where $\X_1$ has $l$ columns, and then use the
SVD of $\X_1$, $\X_1 =\U_{X_1} \mSigma_{X_1} \V_{X_1}^\top$, to approximate the
SVD of $\X$ \citep{Frieze98}.  This method is most commonly used to generate a
`matrix projection' approximation \citep{Kumar09b} of $\X$ as follows:
\begin{equation}
\label{eq:mat_proj_defined}
\tl \X^{col} = \U_{X_1}\U_{X_1}^\top \X.
\end{equation}
The runtime of the Column-sampling method is dominated by the SVD of $\X_1$
which takes O($nl^2$) time to perform and is feasible for small $l$.

In contrast to the Column-sampling method, the \nys\ method deals only with
SPSD matrices.  We start with an $n \times n$ SPSD matrix, sampling $l$ columns
such that $\K = \begin{bmatrix} \K_1 & \K_2\\ \end{bmatrix}$, where $\K_1$ has $l$
columns, and define $\W$ as the $l \times l$ matrix consisting of the
intersection of these $l$ columns with the corresponding $l$ rows of $\K$.
Since $\K$ is SPSD, $\W$ is also SPSD. Without loss of generality, we can
rearrange the columns and rows of $\K$ based on this sampling such that:
\begin{equation}
\label{eq:blockK}
\K = 
\begin{bmatrix}
      \W & \widehat{\K}_1^\top\\
      \widehat{\K}_{1} & \widehat{\K}_{2}
\end{bmatrix}
\quad \text{where} \quad
\K_1 = 
\begin{bmatrix}
\W \\
\widehat{\K}_{1}
\end{bmatrix}
\quad \text{and} \quad
\K_2 = 
\begin{bmatrix}
\widehat{\K}_{1}^\top \\
\widehat{\K}_{2}
\end{bmatrix},
\end{equation}
The \nys\ method uses $\W$ and $\K_1$ from (\ref{eq:blockK}) to generate a
`spectral reconstruction' \citep{Kumar09b} approximation of $\K$ as $\tl
\K^{nys} = \K_1 \pinv{\W} \K_1^\top$.
Since the running time complexity of SVD on $\W$ is in $O(l^3)$ and matrix
multiplication with $\K_1$ takes $O(l^2n)$, the total complexity of the \nys\
approximation computation is also in $O(l^2n)$.

\subsection{Matrix Coherence}
\label{ssec:coherence}

Matrix coherence measures the extent to which the singular vectors of a matrix
are correlated with the standard basis.  As previously mentioned, coherence has
been to analyze techniques such as compressed sensing, matrix completion,
robust PCA, and the \nys\ method.  These analyses have used a variety of
related notions of coherence.  If we let $\evec_i$ be the $i$th column of the
standard basis, we can define three basic notions of coherence as follows:

\begin{definition}[$\mu$-Coherence]
\label{eq:coherence}
Let $\U \in \Rset^{n \times r}$ contain orthonormal columns with $r < n$.
Then the $\mu$-coherence of $\U$ is:
\begin{equation}
\mu(\U) = \sqrt{n} \max_{i,j} \big |{\U}_{ij} \big | \;.
\end{equation} 
\end{definition}

\begin{definition}[$\mu_0$-Coherence]
\label{eq:coherence_0}
Let $\U \in \Rset^{n \times r}$ contain orthonormal columns with $r < n$ and
define $\P_U = \U \U^\top$ as its associated orthogonal projection matrix. Then
the $\mu_0$-coherence of $\U$ is:
\begin{equation}
\mu_0(\U) = \frac{n}{r} \max_{1 \le i \le n} \norm{\P_U \evec_i}^2 = \max_{1 \le i \le n} \norm{\U_{(i)}}^2  \,. 
\end{equation} 
\end{definition}

\begin{definition}[$\mu_1$-Coherence]
\label{eq:coherence_1}
Given the matrix $\X \in \Rset^{n \times m}$ with rank $r$, left and right
singular vectors, $\U_X$ and $\V_X$, and define $\T =
\sum_{1 \le k \le r} \U_X^{(k)} {\V_X^{(k)}}^\top$.  Then, the
$\mu_1$-coherence of $\X$ is: 
\begin{equation}
\mu_1(\X) = \sqrt{\frac{nm}{r}} \max_{ij} \big |{\T}_{ij} \big | \,. 
\end{equation} 
\end{definition}

In \citet{Talwalkar10}, $\mu(\U)$ is used to provide coherence-based bounds for
the \nys\ method, where $\U$ corresponds to the singular vectors of a low-rank
SPSD kernel matrix. Low-rank matrices are also the focus of work on matrix
completion by \citet{candes08} and \citet{Keshavan09b}, though they deal with
more general rectangular matrices with SVD $\X = \U_{X} \mSigma_{X}
\V_{X}^\top$, and they use $\mu_0(\U_X)$, $\mu_0(\V_X)$ and $\mu_1(\X)$ to
bound the performance of two different matrix completion algorithms. Note that
a stronger, more complex notion of coherence is used in \citet{candes08b} to
provide tighter bounds for the matrix completion algorithm presented in
\citet{candes08} (definition omitted here).  Moreover, coherence has also been
used to analyze algorithms dealing with low-rank matrices in the presence of
noise, e.g., \citet{Keshavan09,candes09b} for noisy matrix completion and
\citet{candes09} for robust PCA. In these analyses, the coherence of the
underlying low-rank matrix once again appears in the form of $\mu_0(\cdot)$ and
$\mu_1(\cdot)$.

In this work we choose to focus on $\mu_0$.  In comparison to $\mu$, $\mu_0$ is
a more robust measure of coherence, as it deals with row norms of $\U$, rather
than the maximum entry of $\U$, and the two notions are related by a simple
pair of inequalities: $\mu^2 / r \le \mu_0 \le \mu^2$.
\ignore{Hence the bounds in \citet{Talwalkar10} also hold for $\mu_0$.}
Furthermore, since we focus on coherence in the context of algorithms that
sample columns of the original matrix, $\mu_0$ is a more natural choice than
$\mu_1$, since existing coherence-based bounds for these algorithms (both in
\citet{Talwalkar10} and in Section \ref{sec:theory} of this work) only
depend on the left singular vectors of the matrix.

\section{\textsc{Estimate-Coherence} Algorithm}
\label{sec:algorithm}
As discussed in the previous section, matrix coherence has been used to analyze
a variety of algorithms, under the assumption that the input matrix is either
exactly low-rank or is low-rank with the presence of noise.  In this section,
we present a novel algorithm to estimate the coherence of matrices following
the same assumption.  Starting with an arbitrary $n \times m$ matrix, $\X$, we
are ultimately interested in an estimate of $\mu_0(\U_X)$, which contains the
scaling factor $n / r$ as shown in Definition \ref{eq:coherence_0}.  However,
our estimate will also involve singular vectors in dimension $n$, and as we
mentioned above, $r$ is assumed to be small.  Hence, neither of these scaling
terms has a significant impact on our estimation.  As such, our algorithm
focuses on the closely related expression:
\begin{equation}
\label{eq:gamma}
\gamma(\U) = \max_{1 \le i \le n} \norm{\P_U \evec_i}^2 = \frac{r}{n} \mu_0\,.
\end{equation}

Our proposed algorithm is quite similar in flavor to the Column-sampling
algorithm discussed in Section \ref{ssec:low_rank}. It estimates
coherence by first sampling $l$ columns of the matrix and subsequently using
the left singular vectors of this submatrix to obtain an estimate.  Note that
our algorithm applies both to exact low-rank matrices as well as low-rank
matrices perturbed by noise. In the latter case, the algorithm requires a
user-defined low-rank parameter, $r$.  The runtime of this algorithm is
dominated by the singular value decomposition of the $n \times l$ submatrix,
and hence is in O($l^2n)$. The details of the \textsc{Estimate-Coherence}
algorithm are presented in Figure \ref{fig:coh_algorithm}.  

\begin{figure}[ht!]
\textbf{Input}: $n \times l$ matrix ($\X_1$) storing $l$ columns of arbitrary $n \times m$
matrix $\X$, low-rank parameter ($r$) \\
\textbf{Output}: An estimate of the coherence of $\X$ 
\begin{ALGO}{Estimate-Coherence}{\X_1, r}
\SET{\U_{X_1} }{\Call{SVD}{\X_1} 
\ \ $\COMMENT{keep left singular vectors}$}  
\SET{q}{\min\big(\rank(\X_1),r\big)}
\SET{\tl\U}{\Call{Truncate}{\U_{X_1},q} 
\ \ $\COMMENT{keep top $q$ singular vectors of $\X_1$}$}
\SET{\gamma(\X_1)}{\Call{Calculate-Gamma}{\tl\U}
\ \ $\COMMENT{see equation (\ref{eq:gamma})}$ }
\RETURN{\gamma(\X_1)}
\end{ALGO}
\caption{The proposed sampling-based algorithm to estimate matrix coherence.
Note that $r$ is only required when $\X$ is perturbed by noise.}
\label{fig:coh_algorithm}
\end{figure}

\section{Theory}
\label{sec:theory}
In this section we present a theoretical analysis of
\textsc{Estimate-Coherence} when used with low-rank matrices. Our main
theoretical results are presented in Theorem \ref{thm:est_coh}.

\begin{theorem}
\label{thm:est_coh}
Define $\X \in \Rset^{n \times m}$ with $\rank(\X) = r \ll n$, and denote by
$\U_X$ the $r$ left singular vectors of $\X$ corresponding to its non-zero
singular values.  Let the orthogonal projection onto span($\X_1$) be denoted by
$\P_{X_1} = \U_{X_1}\U_{X_1}^\top$, and define the projection onto its
orthogonal complement as $\P_{X_1,\perp}$. Let $\X_1$ be a set of $l$ columns
of $\X$ sampled uniformly at random, and let $\x$ be a column of $\X$ that is
not in $\X_1$ that is sampled uniformly at random.  Then the following
statements can be made about $\gamma(\X_1)$, which is the output of
\textsc{Estimate-Coherence}($\X_1$): 

\begin{enumerate}
\item $\gamma(\X_1)$ is a monotonically increasing estimate of $\gamma(\X)$.
Furthermore, if $\X'_1 = \begin{bmatrix} \X_1 & \x\\ \end{bmatrix}$ with
$\x_\perp = \P_{X_1,\perp} \x$, then $0
\le \gamma(\X'_1) - \gamma(\X_1) \le \gamma(\z)$, where $\z = \x_\perp 
/ \norm{\x_\perp}$.
\item $\gamma(\X_1) = \gamma(\X)$ when $\rank(\X_1) = \rank(\X)$, and the
probability of this event is dependent on the coherence of $\X$.  Specifically,
for any $\delta > 0$, it occurs with probability $1-\delta$ for $l \geq r^2
\mu_0(\U_X) \max \big(C_1 \log(r), C_2 \log(3 / \delta ) \big)$ for positive
constants $C_1$ and $C_2$.
\end{enumerate}
\end{theorem}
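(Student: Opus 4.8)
For Part 1, the statement is purely geometric, concerning nested projections. Since $\X'_1 = \begin{bmatrix}\X_1 & \x\end{bmatrix}$, we always have $\mathrm{span}(\X_1) \subseteq \mathrm{span}(\X'_1)$, and when $\x_\perp = \P_{X_1,\perp}\x \neq \0$ the orthogonal complement of $\mathrm{span}(\X_1)$ inside $\mathrm{span}(\X'_1)$ is exactly the line spanned by the unit vector $\z = \x_\perp / \norm{\x_\perp}$. First I would record the additive decomposition $\P_{X'_1} = \P_{X_1} + \z\z^\top$, which holds because $\z$ is orthogonal to $\mathrm{span}(\X_1)$ and completes its span to that of $\X'_1$ (and $\P_{X'_1}=\P_{X_1}$ when $\x_\perp=\0$). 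Applying this to any standard basis vector gives $\norm{\P_{X'_1}\evec_i}^2 = \norm{\P_{X_1}\evec_i}^2 + \norm{\z\z^\top\evec_i}^2$, so every term can only grow; taking the maximum over $i$ yields $\gamma(\X'_1) \ge \gamma(\X_1)$, which is monotonicity and the lower bound. For the upper bound I would evaluate the decomposition at the index $i^\star$ maximizing the left-hand side and bound the two summands separately: $\norm{\P_{X_1}\evec_{i^\star}}^2 \le \gamma(\X_1)$ by definition of the maximum, and $\norm{\z\z^\top\evec_{i^\star}}^2 \le \max_i \norm{\z\z^\top\evec_i}^2 = \gamma(\z)$, which together give $\gamma(\X'_1) - \gamma(\X_1) \le \gamma(\z)$.

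The deterministic half of Part 2 uses the same subspace reasoning: $\mathrm{span}(\X_1)\subseteq\mathrm{span}(\X)$ always, and if $\rank(\X_1)=\rank(\X)=r$ the two subspaces have equal dimension and hence coincide, so $\P_{X_1}=\P_X$ and therefore $\gamma(\X_1)=\gamma(\X)$. The substance is the sample-complexity bound for the event $\{\rank(\X_1)=r\}$. The key reduction I would make is to write the columns through the SVD, $\X^{(j)} = \U_X \mSigma_X (\V_X)_{(j)}^\top$, so that the $l$ sampled columns span $\mathrm{col}(\X)$ if and only if the corresponding rows $\{(\V_X)_{(j)} : j \in S\}$ span $\Rset^r$; the invertibility of $\mSigma_X$ on its range is what lets me discard it. Equivalently, setting $\M = \sum_{j \in S}(\V_X)_{(j)}^\top(\V_X)_{(j)}$, the event is exactly $\{\lambda_{\min}(\M) > 0\}$. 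Coherence enters at this point, since $\lambda_{\max}\big((\V_X)_{(j)}^\top(\V_X)_{(j)}\big) = \norm{(\V_X)_{(j)}}^2$ is controlled uniformly by the coherence of the singular vectors.

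With this reduction the plan is to invoke a matrix Chernoff / Ahlswede--Winter bound for the minimum eigenvalue of a sum of independent PSD rank-one matrices. Modeling the sample as drawn (with replacement, then correcting for sampling without replacement) gives $\E[\M] = \tfrac{l}{m}\I_r$, so $\mu_{\min} = \lambda_{\min}(\E[\M]) = l/m$, while each summand has spectral norm at most the maximal squared row norm $R = \max_j \norm{(\V_X)_{(j)}}^2$. The bound then controls $\Pr[\lambda_{\min}(\M) \le (1-\e)\mu_{\min}]$ by $r\,\exp(-c\,\e^2\,\mu_{\min}/R)$; fixing $\e$ to a constant and requiring this to be at most $\delta$ forces $\mu_{\min}/R \gtrsim \log(r/\delta)$, i.e. a lower bound on $l$ proportional to $m R \log(r/\delta)$. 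I would then substitute $R$ in terms of the stated coherence via $R = \tfrac{r}{m}\mu_0$, which cancels the factor $m$ and yields a coefficient polynomial in $r$ together with the $\mu_0$ dependence; the split $\max(C_1\log r,\, C_2\log(3/\delta))$ arises from separating the dimension prefactor $r$ in the Chernoff bound from the confidence term $\delta$, with the constant $3$ coming from a union over the failure modes of the two-sided version.

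The main obstacle is the concentration step and the bookkeeping surrounding it. Three points need care. First, the sampling is without replacement, so I must either apply a without-replacement matrix Chernoff inequality directly or pay a small overhead to reduce to the with-replacement case. Second, assembling the coherence and logarithmic terms into exactly the form $r^2 \mu_0(\U_X) \max(C_1\log r,\, C_2\log(3/\delta))$ requires a clean choice of $\e$ and careful tracking of the power of $r$ and the absolute constants. Third, there is a modeling subtlety in identifying the coherence quantity the concentration bound naturally produces (governed by the row norms of the right singular vectors) and relating it to the $\mu_0(\U_X)$ appearing in the statement; by comparison, the geometric decomposition of Part 1 and the SVD reduction of Part 2 are routine.
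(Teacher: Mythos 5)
Your Part 1 follows the paper's own route almost verbatim: the additive decomposition you record is exactly the paper's Lemma~\ref{lemma_proj} ($\P_{X'_1} = \P_{X_1} + \z\z^\top$ when $\x_\perp \neq \0$, and $\P_{X'_1} = \P_{X_1}$ otherwise), proved by the same orthogonality-and-uniqueness argument, and your max-decoupling step is the paper's derivation of $0 \le \gamma(\X'_1) - \gamma(\X_1) \le \gamma(\z)$; your Pythagorean identity $\norm{\P_{X'_1}\evec_i}^2 = \norm{\P_{X_1}\evec_i}^2 + \norm{\z\z^\top\evec_i}^2$ is the same statement as the paper's comparison of diagonal entries, since $\norm{\P\evec_i}^2 = \evec_i^\top \P \evec_i$ for any orthogonal projection $\P$. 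The deterministic half of Part 2 also coincides. The genuine divergence is the concentration step. The paper proves its Lemma~\ref{lemma_rankC} purely by citation: it invokes the reduction $\Pr[\rank(\X_1) = r] \ge \Pr\big(\norm{c\,\V_{X,l}^\top\V_{X,l} - \I}_2 < 1\big)$ from \citet{Talwalkar10} and then applies Theorem~1.2 of \citet{Candes2007} together with $\mu^2 \le r\mu_0$; that citation is the sole source of the $r^2\mu_0$ scaling and of the $\max\big(C_1\log r, C_2\log(3/\delta)\big)$ form, constant $3$ included. Your plan --- reduce to $\lambda_{\min}\big(\sum_{j \in S} (\V_X)_{(j)}^\top (\V_X)_{(j)}\big) > 0$ and apply a matrix Chernoff bound --- is sound, is self-contained modulo one standard inequality, and would in fact prove a stronger statement: with $\mu_{\min} = l/m$ and $R = (r/m)\mu_0$ as you compute, the requirement becomes $l \ge C\, r\,\mu_0 \log(r/\delta)$, which improves the stated bound by a factor of $r$ (the paper's extra factor of $r$ is an artifact of converting Cand\`es--Romberg's $\mu^2$-based hypothesis via $\mu^2 \le r\mu_0$). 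The without-replacement issue you flag is real but routine, handled by standard reductions for matrix concentration under sampling without replacement. Your third flagged obstacle, however, deserves emphasis: column sampling of $\X$ selects rows of $\V_X$, so any such argument naturally yields $\mu_0(\V_X)$, the coherence of the \emph{right} singular vectors, whereas the theorem advertises $\mu_0(\U_X)$. This is not a gap in your approach but in the theorem itself: the paper's own proof runs through $\V_{X,l}$ and silently inherits the identification $\U_X = \V_X$ from the SPSD setting of its cited work, so for general rectangular $\X$ the bound should be read with $\mu_0(\V_X)$ in place of $\mu_0(\U_X)$.
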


The second statement in Theorem \ref{thm:est_coh} leads to Corollary
\ref{cor:mat_proj}, which relates matrix coherence to the performance of the
Column-sampling algorithm when used for matrix projection on a low-rank matrix.
\begin{corollary}
\label{cor:mat_proj}
Assume the same notation as defined in Theorem \ref{thm:est_coh}, and let $\tl
\X^{col}$ be the matrix projection approximation generated by the
Column-sampling method using $\X_1$, as described in
(\ref{eq:mat_proj_defined}).  Then, for any $\delta > 0$, $\tl \X^{col} = \X$
with probability $1-\delta$, for $l \geq r^2 \mu_0(\U_X) \max \big(C_1 \log(r),
C_2 \log(3 / \delta ) \big)$ for positive constants $C_1$ and $C_2$.
\end{corollary}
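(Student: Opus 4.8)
The plan is to obtain the corollary as an immediate consequence of the second statement of Theorem~\ref{thm:est_coh}. That statement guarantees that, under the stated lower bound on $l$, the event $\rank(\X_1) = \rank(\X)$ holds with probability at least $1-\delta$. The remaining work is purely deterministic: I would show that whenever $\rank(\X_1) = \rank(\X)$, the Column-sampling matrix projection reconstruction of equation~(\ref{eq:mat_proj_defined}) is exact, i.e.\ $\tl \X^{col} = \X$.

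First I would note that, since $\X_1$ is formed from a subset of the columns of $\X$, its column space is always contained in the column space of $\X$. Conditioning on the event $\rank(\X_1) = \rank(\X) = r$, the two column spaces have equal dimension, so this containment forces them to be equal.

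Next I would invoke the fact that the orthogonal projection onto a subspace is independent of the particular orthonormal basis chosen for it. Since the columns of $\U_{X_1}$ and of $\U_X$ are orthonormal bases of the same column space, their associated projection matrices coincide: $\P_{X_1} = \U_{X_1}\U_{X_1}^\top = \U_X\U_X^\top = \P_X$. Because every column of $\X$ lies in the column space of $\X$, we have $\P_X \X = \X$, and hence $\tl \X^{col} = \P_{X_1}\X = \P_X \X = \X$ on this event, which occurs with probability $1-\delta$ for the stated $l$.

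Since the probabilistic content is inherited entirely from Theorem~\ref{thm:est_coh}, the only obstacle here is the clean linear-algebra bookkeeping above. The genuinely hard part---bounding the sample size $l$ needed to recover the full rank (and therefore the full column space) of $\X$ from a uniform column sample in terms of $\mu_0(\U_X)$---has already been discharged in the proof of the theorem, so the corollary follows with essentially no additional effort.
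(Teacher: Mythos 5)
Your proposal is correct and follows essentially the same route as the paper's own proof: both reduce the corollary to the deterministic observation that $\rank(\X_1) = \rank(\X)$ forces the column spans to coincide (containment plus equal dimension), so that $\tl \X^{col} = \P_{X_1}\X = \X$, and both then inherit the probability bound on this rank event directly from the second statement of Theorem~\ref{thm:est_coh}. The only difference is that you spell out the linear-algebra details (basis-independence of orthogonal projections, $\P_X\X = \X$) that the paper's two-line proof leaves implicit.
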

\begin{proof}
When $\rank(\X_1) = \rank(\X)$, the columns of $\X_1$ span the columns of $\X$.
Hence, when this event occurs, projecting $\X$ onto the span of the columns of
$\X_1$ leaves $\X$ unchanged. The second statement in Theorem \ref{thm:est_coh}
bounds the probability of this event.
\end{proof}

\subsection{Proof of Theorem \ref{thm:est_coh}}
We first present Lemmas \ref{lemma_proj} and \ref{lemma_rankC}, and then
complete the proof of Theorem \ref{thm:est_coh} using these lemmas.

\begin{lemma}
\label{lemma_proj}
Assume the same notation as defined in Theorem \ref{thm:est_coh}.  Further, let
$\P_{X'_1}$ be the orthogonal projection onto span($\X'_1$) and define $s =
\norm{\x_\perp}$. Then, for any $l \in [1, n - 1]$, the following
equalities relate the projection matrix $\P_{X'_1}$ to $\P_{X_1}$:
\begin{equation}
  \P_{X'_1} =
\begin{cases}
\P_{X_1} +  \z\z^\top& \text{if $s > 0$};\\
\P_{X_1} & \text{if $s = 0$}.
\end{cases}
\end{equation}
\end{lemma}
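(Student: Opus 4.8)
The plan is to treat the two cases separately, in each case pinning down the column space of $\X'_1$ explicitly and then exhibiting the unique orthogonal projection onto it. First I would record the orthogonal decomposition $\x = \P_{X_1}\x + \x_\perp$, where $\P_{X_1}\x \in \operatorname{span}(\X_1)$ and $\x_\perp = \P_{X_1,\perp}\x$ is orthogonal to $\operatorname{span}(\X_1)$. Since appending to $\X_1$ a column that already lies in $\operatorname{span}(\X_1)$ does not enlarge the column space, this immediately yields $\operatorname{span}(\X'_1) = \operatorname{span}(\X_1) + \operatorname{span}(\x) = \operatorname{span}(\X_1) + \operatorname{span}(\x_\perp)$, which is the identity driving both cases.

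For the case $s = 0$, the condition $\x_\perp = \0$ means $\x \in \operatorname{span}(\X_1)$, so $\operatorname{span}(\X'_1) = \operatorname{span}(\X_1)$ and the two projections coincide, giving $\P_{X'_1} = \P_{X_1}$. For the case $s > 0$, the vector $\z = \x_\perp / \norm{\x_\perp}$ is a unit vector orthogonal to $\operatorname{span}(\X_1)$, so $\operatorname{span}(\X'_1) = \operatorname{span}(\X_1) \oplus \operatorname{span}(\z)$ is an orthogonal direct sum. I would then verify directly that the candidate matrix $\M = \P_{X_1} + \z\z^\top$ is \emph{the} orthogonal projection onto this subspace, appealing to uniqueness of the orthogonal projection.

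The verification has three ingredients. Symmetry is immediate since both summands are symmetric. Idempotency follows from expanding $\M^2$ and using $\P_{X_1}^2 = \P_{X_1}$, the orthogonality relation $\P_{X_1}\z = \0$ (so the cross terms $\P_{X_1}\z\z^\top$ and $\z\z^\top\P_{X_1}$ vanish), and $\z^\top\z = 1$, leaving $\M^2 = \M$. Finally, for the range I would check that $\M\v = \v$ for every $\v \in \operatorname{span}(\X_1)$ (because $\z^\top\v = 0$) and that $\M\z = \z$, while by construction $\operatorname{range}(\M) \subseteq \operatorname{span}(\X_1) + \operatorname{span}(\z) = \operatorname{span}(\X'_1)$; together these show $\operatorname{range}(\M) = \operatorname{span}(\X'_1)$. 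Hence $\P_{X'_1} = \M$.

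The computation is entirely routine, and I expect no substantive obstacle. The single point requiring care is the orthogonality relation $\P_{X_1}\z = \0$: it is precisely what makes the cross terms in $\M^2$ drop out and what distinguishes this clean rank-one update from the behavior of a general rank-one perturbation of a projection.
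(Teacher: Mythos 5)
Your proposal is correct and follows essentially the same route as the paper: both identify $\operatorname{span}(\X'_1)$ as the orthogonal sum of $\operatorname{span}(\X_1)$ and $\operatorname{span}(\x_\perp)$, and both invoke uniqueness of the orthogonal projection to conclude $\P_{X'_1} = \P_{X_1} + \z\z^\top$ when $s>0$ and $\P_{X'_1} = \P_{X_1}$ when $s=0$. The only difference is that you explicitly verify symmetry, idempotency, and the range of the candidate matrix, which the paper leaves implicit in its appeal to the sum-of-orthogonal-projections fact.
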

\begin{proof}
First assume that $s = 0$, which implies that $\x$ is in the span of the
columns of $\X_1$.  Since orthogonal projections are unique, then clearly
$\P_{X'_1} = \P_{X_1}$ in this case. Next, assume that $s>0$, in which case the
span of the columns of $\X'_1$ can be viewed as the subspace spanned by the
columns of $\X_1$ along with the subspace spanned by the residual of $\x$, i.e.,
$\x_{\perp}$. Observe that $\z\z^\top$ is the orthogonal projection onto
span($\x_\perp$). Since these two subspaces are orthogonal and since orthogonal
projection matrices are unique, we can write $\P_{X'_1}$ as the sum of
orthogonal projections onto these subspaces, which matches the statement of the
lemma for $s>0$. 
\end{proof}

\begin{lemma}
\label{lemma_rankC}
Assume the same notation as defined in Theorem \ref{thm:est_coh}.  Then, if $l
\geq r^2 \mu_0(\U_X) \max \big(C_1 \log(r), C_2 \log(3 / \delta ) \big),$ where
$C_1$ and $C_2$ are positive constants, then for any $\delta > 0$, with
probability at least $1-\delta$, $\rank(\X_1) = r$.
\end{lemma}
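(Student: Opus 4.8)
The plan is to reduce the rank statement to a question about the smallest eigenvalue of a random sum of rank-one matrices, and then apply a matrix concentration inequality in which the coherence controls the key boundedness parameter. First I would use the thin SVD $\X = \U_X\mSigma_X\V_X^\top$ to rewrite each column as $\X^{(j)} = \U_X\mSigma_X(\V_X)_{(j)}^\top$, i.e. as the image under the full-column-rank map $\U_X\mSigma_X$ of the $j$th row of $\V_X$, written $\v_j = (\V_X)_{(j)}^\top \in \Rset^r$. Since $\U_X\mSigma_X$ has rank $r$, a sampled index set $S$ with $|S| = l$ yields $\rank(\X_1) = r$ if and only if the vectors $\set{\v_j : j \in S}$ span $\Rset^r$, equivalently if and only if $\M_S = \sum_{j\in S}\v_j\v_j^\top \succ 0$, i.e. $\lambda_{\min}(\M_S) > 0$. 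This is the crucial reduction: it turns ``preservation of rank under column sampling'' into a lower-tail bound on the smallest eigenvalue of a sum of rank-one terms built from rows of the singular vectors, and the squared norms $\norm{\v_j}^2$ of those rows are exactly the quantities the coherence bounds.

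Next I would set up the probabilistic model. To obtain independent summands I would analyze sampling $l$ columns \emph{with} replacement and transfer the conclusion to the without-replacement case of the lemma at the end: since adding columns can only enlarge the span, the without-replacement spanning probability dominates the with-replacement one, so a lower bound for the latter suffices. Under i.i.d.\ uniform draws, orthonormality of $\V_X$ gives $\E[\v_j\v_j^\top] = \frac1m\sum_{k=1}^m \v_k\v_k^\top = \frac1m\V_X^\top\V_X = \frac1m\I$, hence $\E[\M_S] = \frac lm\I$ and $\lambda_{\min}(\E[\M_S]) = l/m$. The relevant boundedness parameter is $R = \max_j\norm{\v_j}^2 = \frac rm\,\mu_0(\V_X)$ by the definition of coherence applied to the right singular vectors; note that it is these rows that govern the column span, so the natural controlling quantity is $\mu_0(\V_X)$, which coincides with $\mu_0(\U_X)$ in the symmetric case of primary interest.

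I would then apply a matrix Chernoff / operator-Bernstein lower-tail bound (of Ahlswede--Winter / Tropp type) to $\lambda_{\min}(\M_S)$. Schematically this gives $\pr[\lambda_{\min}(\M_S) = 0] \le r\exp(-c\,\lambda_{\min}(\E[\M_S])/R) = r\exp\!\big(-c\,l/(r\,\mu_0)\big)$, where the leading dimension factor $r$ is responsible for the eventual $\log r$ term. Forcing this probability below $\delta$ and solving for $l$ gives $l \gtrsim r\,\mu_0\,(\log r + \log(1/\delta))$, which I would reorganize into the stated $\max\!\big(C_1\log r,\,C_2\log(3/\delta)\big)$ form and combine with the with/without-replacement comparison to conclude $\rank(\X_1) = r$ with probability at least $1-\delta$.

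The main obstacle is matching the exact scaling. A sharp matrix-Chernoff argument as above produces a bound proportional to $r\,\mu_0\log(r/\delta)$, whereas the lemma carries an extra factor of $r$ (i.e.\ $r^2\mu_0$); the intended route therefore almost certainly replaces the sharp operator bound by a cruder estimate --- for instance controlling the deviation $\M_S - \E[\M_S]$ in Frobenius norm, or a per-direction union bound over the $r$ coordinates, each of which costs an additional factor of $r$. Pinning down precisely where that factor is spent, verifying the hypotheses of whichever concentration inequality is used, and handling the without-replacement sampling rigorously are the delicate parts; solving the resulting inequality for $l$ and recovering the explicit constants $C_1, C_2$ is then routine.
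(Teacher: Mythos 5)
Your proposal is correct --- indeed it establishes a strictly sharper bound than the lemma claims --- but it takes a genuinely different route from the paper. The shared core is the reduction you describe: since $\X^{(j)} = \U_X \mSigma_X \v_j$ with $\U_X\mSigma_X$ of full column rank, $\rank(\X_1) = r$ exactly when the sampled rows of $\V_X$ span $\Rset^r$. From there, however, the paper runs no concentration argument of its own: it quotes from \citet{Talwalkar10} the inequality $\Pr[\rank(\X_1) = r] \ge \Pr \big( \norm{c \V_{X,l}^\top \V_{X,l} - \I}_2 < 1 \big)$ and then invokes Theorem 1.2 of \citet{Candes2007}, a sampling theorem expressed in terms of the max-entry coherence $\mu$; to phrase the conclusion in terms of $\mu_0$ it applies the conversion $\mu^2 \le r\mu_0$. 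That conversion is precisely where the extra factor of $r$ enters, so your suspicion that the $r^2$ in the lemma is an artifact of a lossy step is right, although the mechanism is the $\mu$-to-$\mu_0$ translation rather than a Frobenius-norm estimate or per-coordinate union bound as you guessed. Your direct matrix-Chernoff argument works natively with the parameter $R = \max_j \norm{\v_j}^2 = \frac{r}{m}\mu_0(\V_X)$ and so yields $l \gtrsim r \mu_0 (\log r + \log(1/\delta))$, which implies the lemma's hypothesis after adjusting constants; the price is that you must handle the with-/without-replacement coupling yourself, a standard monotonicity argument that you correctly flag. What the paper's route buys is brevity and explicit constants inherited from \citet{Candes2007}; what yours buys is a self-contained proof, a sharper rate, and a transparent explanation of why coherence controls rank preservation. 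Finally, your observation that the quantity actually governing column sampling is $\mu_0(\V_X)$ rather than $\mu_0(\U_X)$ is apt: the paper's own proof has the same feature (its $\V_{X,l}$ is built from the right singular vectors), so the lemma's statement in terms of $\U_X$ implicitly relies on the symmetric/SPSD setting inherited from \citet{Talwalkar10}, exactly the case you single out.
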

\begin{proof}
Assuming uniform sampling at random, \citet{Talwalkar10} shows that
$\Pr[\rank(\X_1) = r] \ge \Pr \big (\|c \V_{X,l}^\top \V_{X,l} - \I\|_2 < 1
\big )$ for any $c \ge 0$, where $\V_{X,l} \in \Rset^{l \times r}$ corresponds
to the first $l$ components of the $r$ right singular vectors of $\X$. Applying
Theorem $1.2$ in \citet{Candes2007} and using the identity $r \mu_0 \ge \mu^2$
yields the statement of the lemma.
\end{proof} 

Now, to prove Theorem \ref{thm:est_coh} we analyze the difference:
\begin{align}
\label{eq:lips2}
\Delta_l & = \big | \gamma(\X_1') - \gamma(\X_1)\big | = \Big | \max_{j}
\evec_j^\top \P_{X'_1} \evec_j - \max_{i} \evec_i^\top \P_{X_1} \evec_i  \Big |
\,.
\end{align}
If $s = \norm{\x_\perp} = 0$, then by Lemma \ref{lemma_proj}, $\Delta_l = 0$. If
$s > 0$, then using Lemma \ref{lemma_proj} and (\ref{eq:lips2}) yields: 
\begin{align}
\label{eq:lips3}
\Delta_l & = \max_{j} \evec_j^\top \big
(\P_{X_1} + \z\z^\top \big ) \evec_j - \max_{i} \evec_i^\top \P_{X_1} \evec_i \\
\label{eq:lips4}
& \le \max_{j} \evec_j^\top \z\z^\top \evec_j = \gamma(\z).
\end{align}
In (\ref{eq:lips3}), we use the fact that orthogonal projections are always
SPSD, which means that $\evec_j^\top \z \z^\top \evec_j \ge 0$ for all $j$ and
ensures that $\Delta_l \ge 0$. In (\ref{eq:lips4}) we decouple
the $\max(\cdot)$ over $\P_{X_1}$ and $\z\z^\top$ to obtain the inequality and
then apply the definition of $\gamma(\cdot)$, which yields the first statement
of Theorem \ref{thm:est_coh}.  Finally, the second statement of Theorem
\ref{thm:est_coh} follows directly from Lemma \ref{lemma_proj} when $s = 0$
along with Lemma \ref{lemma_rankC}, as the former shows that $\Delta_l = 0$ if
$\rank(\X_1) = \rank(\X)$ and the latter gives a coherence-based finite-sample
bound on the probability of this event occurring. 

\section{Experiments}
\label{sec:experiments}
Theorem \ref{thm:est_coh} suggests that the ability to estimate matrix
coherence is dependent on the coherence of the matrix itself. In fact, if we
adversarially construct a high coherence matrix and select columns from this
matrix in an unfortunate manner, the results are quite discouraging.  For
instance, imagine that we generate a random SPSD matrix, e.g., using the
\textsc{Rand} function in Matlab, and then replace its first diagonal with an
arbitrarily large value, leading to a very high coherence matrix.  If we
subsequently force our sampling mechanism to ignore the first column of this
matrix, we are completely unable to estimate coherence using
\textsc{Estimate-Coherence}, as illustrated in Figure \ref{fig:worstcase} on a
synthetic matrix generated in Matlab following this procedure, with $n=1000$
and $k=50$.
\begin{figure}[ht!]
\centering
\ipsfig{.35}{figure=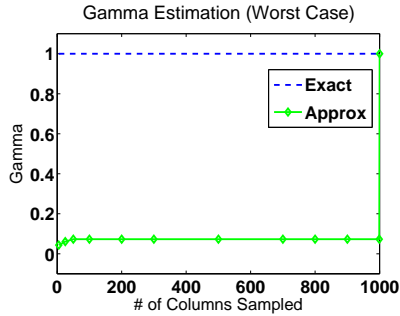}
\caption{Synthetic dataset illustrating worst-case performance
of \textsc{Estimate-Coherence}. }
\label{fig:worstcase}
\end{figure}

In spite of these discouraging worst-case results, our extensive empirical
studies show that \textsc{Estimate-Coherence} performs quite well in practice
on a variety of synthetic and real datasets with varying coherence, 
suggesting that the worst case addressed in theory and matched empirically in
Figure \ref{fig:worstcase} is rarely encountered in practice.  We present
these results in the remainder of this section.
\begin{figure}[ht!]
\begin{center}
\begin{tabular} {@{}c@{}c@{}}
\ipsfig{.35}{figure=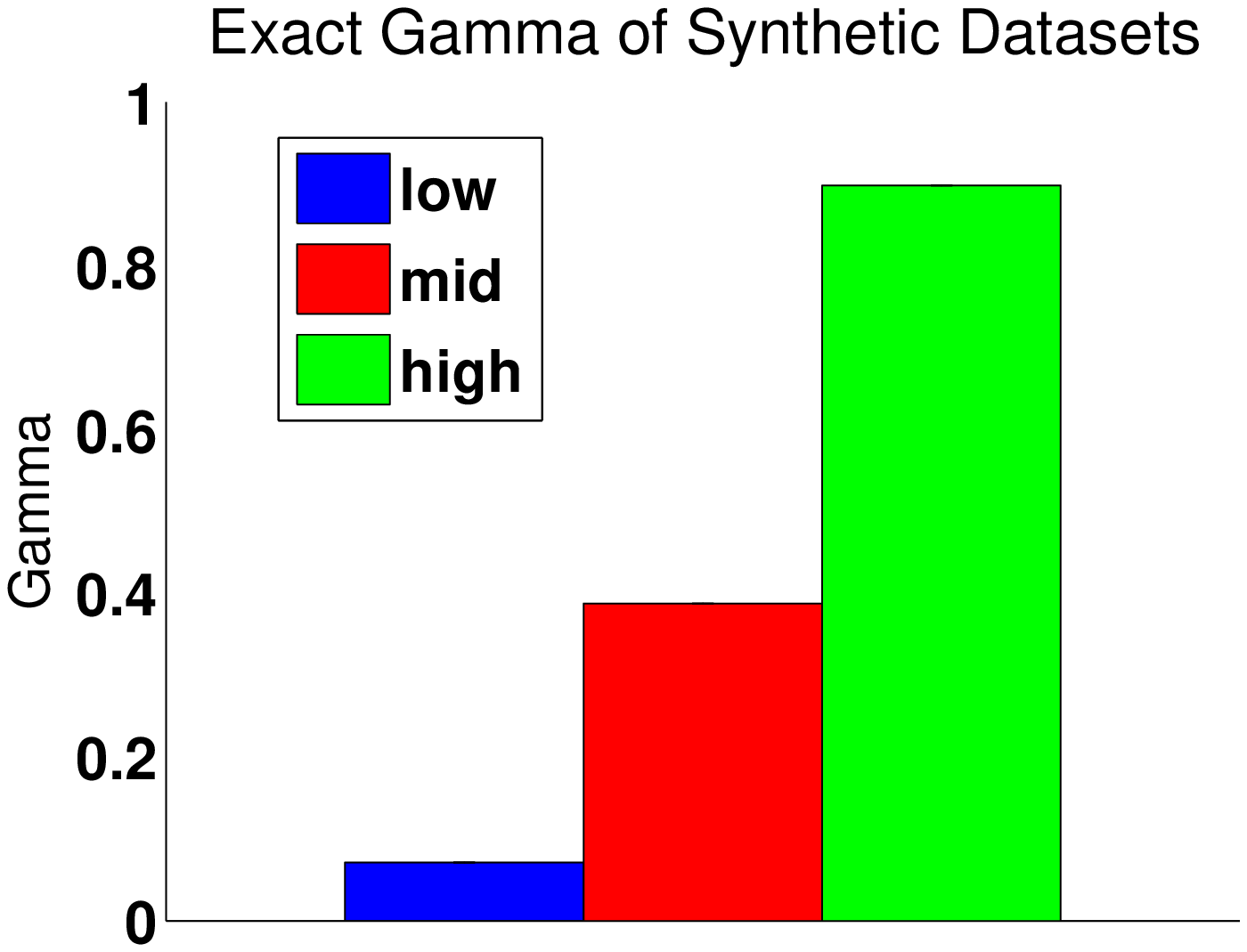} &
\ipsfig{.35}{figure=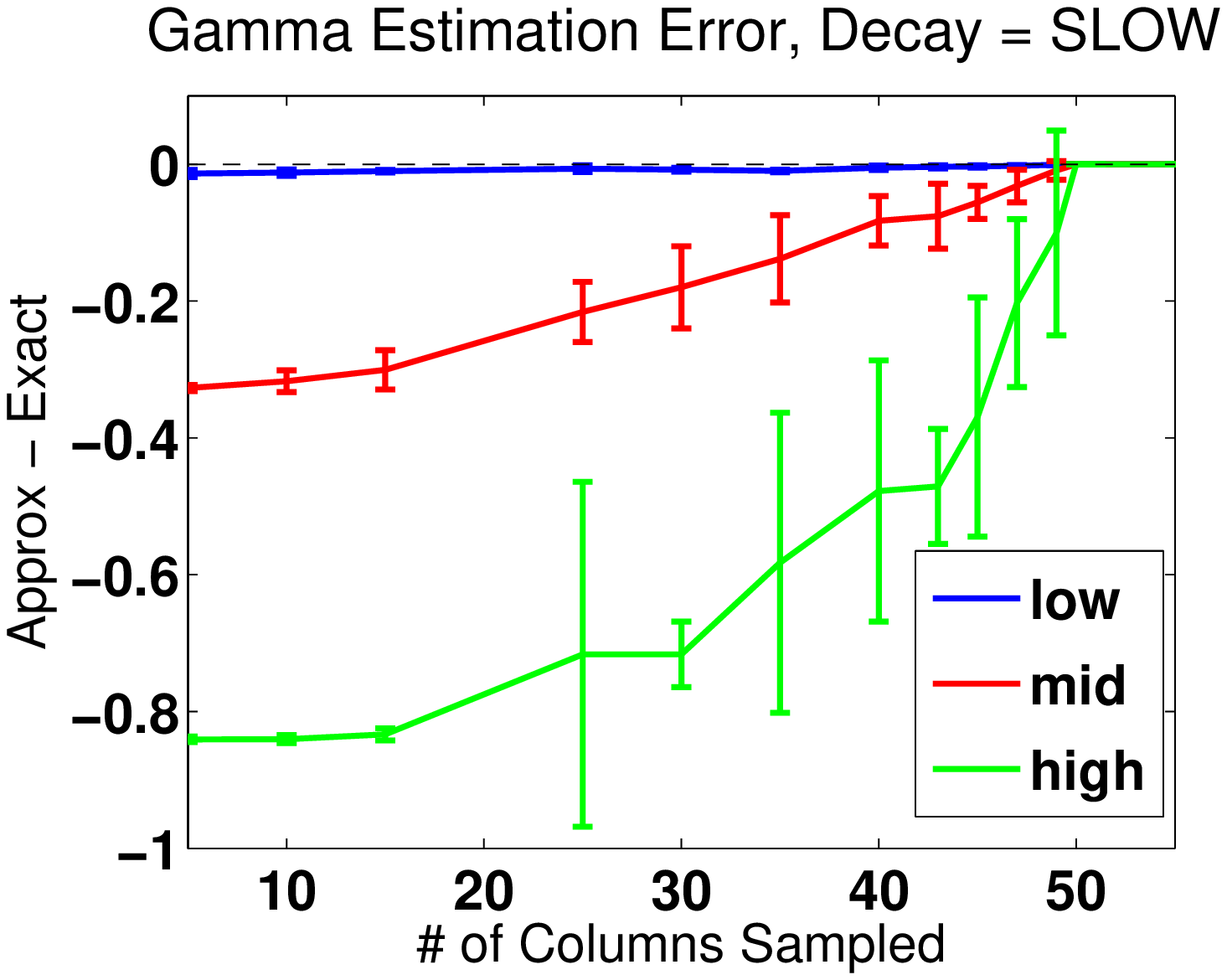} \\ 
(a) & (b) \\
\ipsfig{.35}{figure=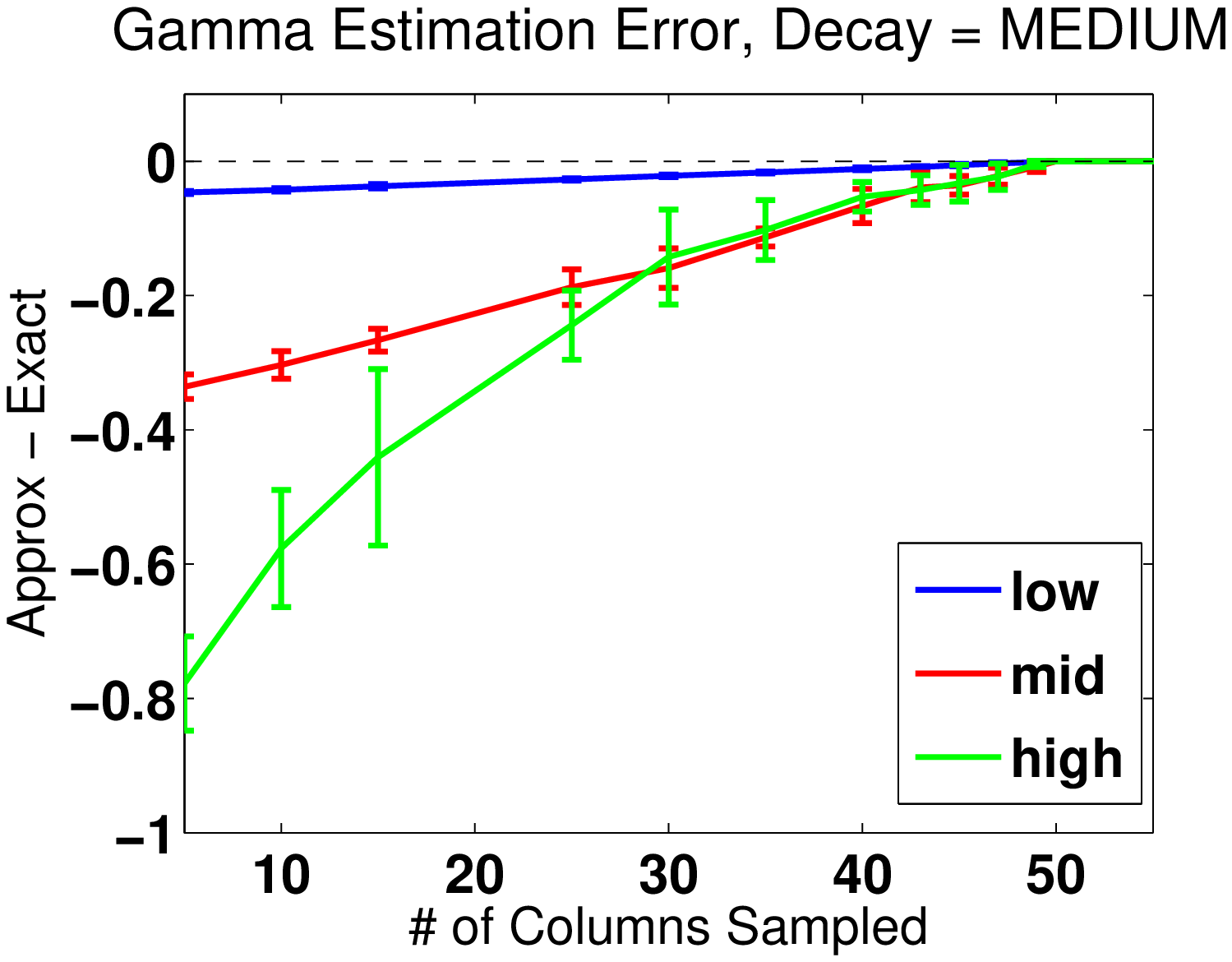} & 
\ipsfig{.35}{figure=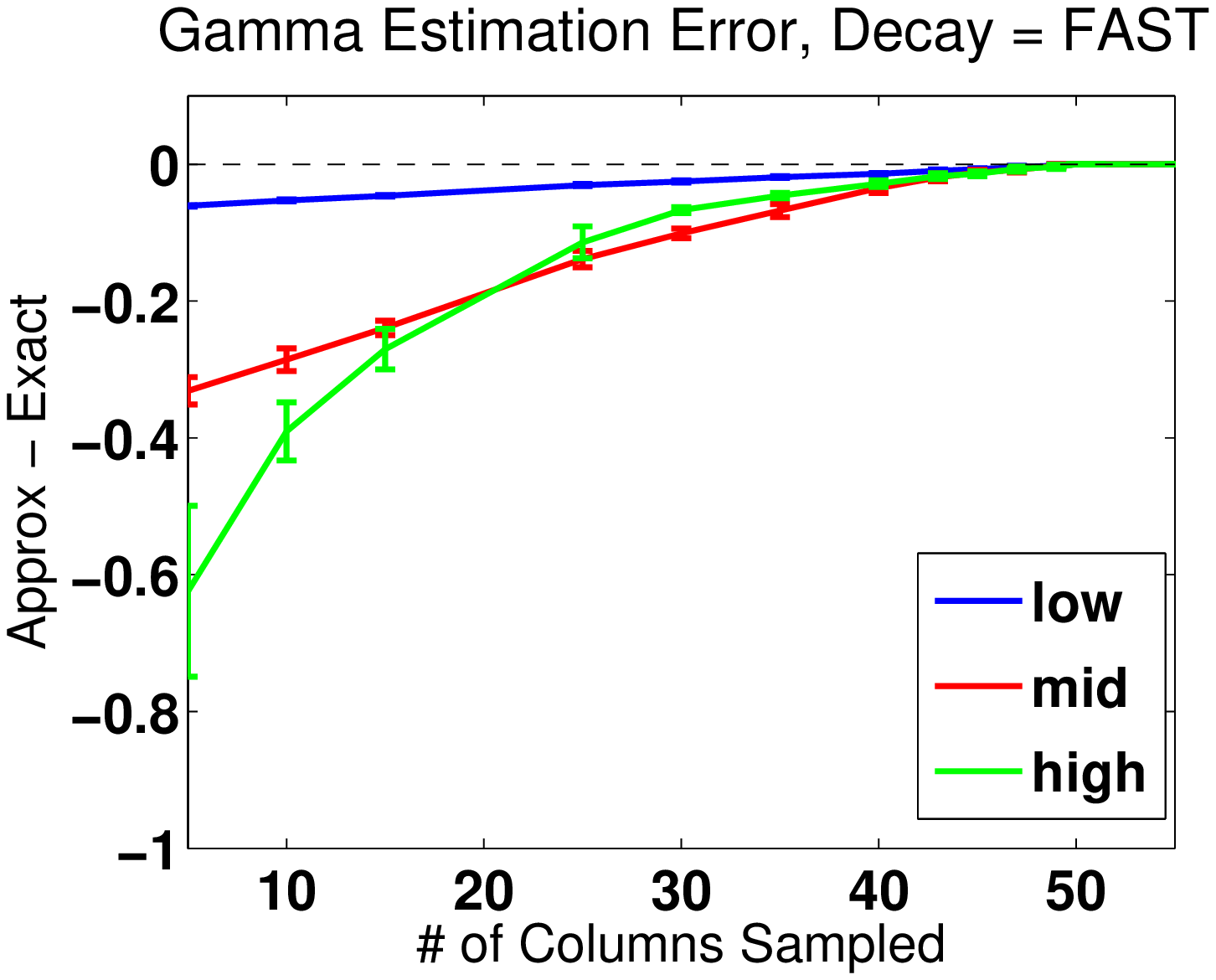} \\ 
(c) & (d) \\
\ipsfig{.35}{figure=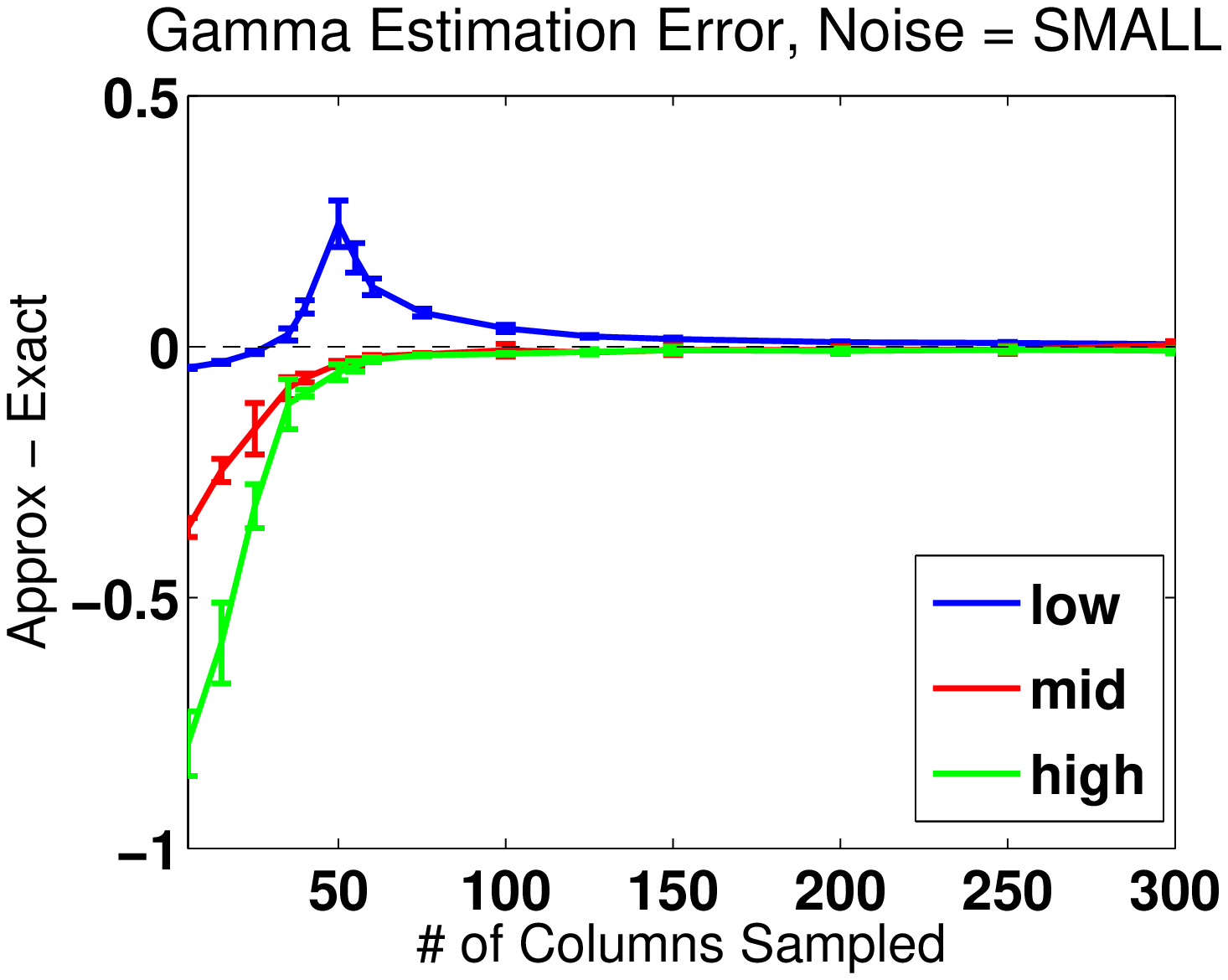} &
\ipsfig{.35}{figure=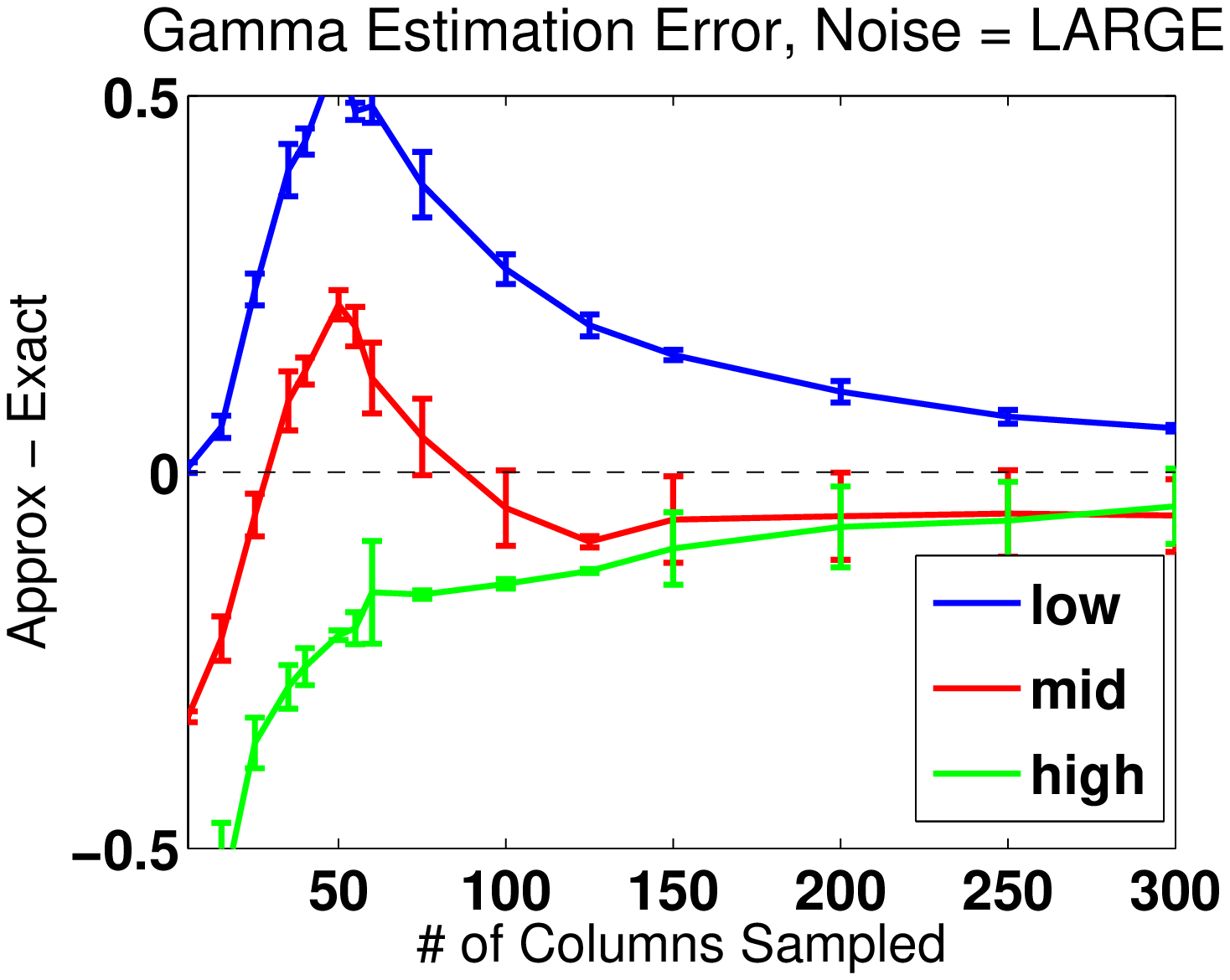} \\
(e) & (f) \\
\end{tabular}
\end{center}
\caption{Experiments with synthetic matrices. (a) True coherence associated
with `low', `mid' and `high' coherences. (b-d) Exact low-rank experiments
measuring difference between the exact coherence and the estimate by
\textsc{Estimate-Coherence}. (e-f) Experiments with low-rank matrices in the
presence of noise, comparing exact and estimated coherence with two different
levels of noise.}
\label{fig:synth_low_rank}
\end{figure}

\subsection{Experiments with synthetic data}
We first generated low-rank synthetic matrices with varying coherence and
singular value spectra, with $n=m=1000$, and $r=50$.  To control the low-rank
structure of the matrix, we generated datasets with exponentially decaying
eigenvalues with differing decay rates, i.e., for $i \in \{1, \ldots, r\}$ we
defined the $i$th singular value as $\sigma_i = \exp(-i\eta)$, where $\eta$
controls the rate of decay and $\eta_{slow} = .01, \, \eta_{medium} = .1, \,
\eta_{fast} = .5$. To control coherence, we independently generated left and
right singular vectors with varying coherences by manually defining one
singular vector and then using QR to generate $r-1$ additional orthogonal
vectors.  We associated this coherence-inducing singular vector with the $r/2$
largest singular value.  We defined our `low' coherence model by forcing the
coherence-inducing singular vector to have minimal coherence, i.e., setting
each component equal to $1 / \sqrt{n}$.  Using this as a baseline, we used $3$
and $8$ times this baseline to generate 'mid' and 'high' coherences (see Figure
\ref{fig:synth_low_rank}(a)).  We then used \textsc{Estimate-Coherence} with
varying numbers of sampled columns to estimate matrix coherence.  Results
reported in Figure \ref{fig:synth_low_rank}(b-d) are means and standard
deviations of $10$ trials for each value of $l$.  Although the coherence
estimate converges faster for the low coherence matrices, the results show that
even in the high coherence matrices, \textsc{Estimate-Coherence} recovers the
true coherence after sampling only $r$ columns.  Further, we note that the
singular value spectrum influences the quality of the estimate.  This
observation is due to the fact that the faster the singular values decay, the
greater the impact of the $r/2$ largest singular value, which is associated
with the coherence-inducing singular vector, and hence the more likely it will
be captured by sampled columns. 

Next, we examined the scenario of low-rank matrices with noise, working with
the `MEDIUM' decaying matrices used in the low-rank experiments.  To create a
noisy matrix from each original low-rank matrix, we first used the QR algorithm
to find a full orthogonal basis containing the $r$ left singular vectors of the
original matrix, and used it as our new left singular vectors (we repeated
this procedure to obtain right singular vectors).  We then defined each of the
remaining $n-r$ singular values of our noisy matrix to equal some fraction of
the $r$th singular value of the original matrix ($0.1$ for `SMALL' noise and
$0.9$ `LARGE' noise). The performance of \textsc{Estimate-Coherence} on these
noisy matrices is presented in Figure \ref{fig:synth_low_rank}(e-f), where
results  are means and standard deviations of $10$ trials for each value of
$l$.  The presence of noise clearly has a negative affect on performance, yet
the estimates are quite accurate for $l=2r$ in the `LOW' noise scenario, and
even for the high coherence matrices with `LARGE' noise, the estimate is fairly
accurate when $l \ge 4r$.

\subsection{Experiments with real data}
We next performed experiments using the datasets listed in Table
\ref{table:coherence_datasets}. We used a variety of kernel functions to
generate SPSD kernel matrices from these datasets, with the resulting kernel
matrices being quite varied in coherence (see Figure \ref{fig:real_data}(a)).
We then used \textsc{Estimate-Coherence} with $r$ set to equal the number of
singular values needed to capture $99\%$ of the spectral energy of each kernel
matrix. Figure \ref{fig:real_data}(b) shows the estimation error over $10$
trials. Although the coherence is well estimated across datasets when $l \ge
100$, the estimates for the two high coherence datasets (\emph{nips} and
\emph{dext}) converge most slowly and exhibit the most variance across trials.
Next, we performed spectral reconstruction using the \nys\ method and matrix
projection reconstruction using the Column-sampling method, and report results
over $10$ trials in Figure \ref{fig:real_data}(c-d).  The results clearly
illustrate the connection between matrix coherence and the quality of these
low-rank approximation techniques, as the two high coherence datasets exhibit
significantly worse performance than the remaining datasets.

\begin{table*}
\small
\centering
\begin{tabular}{|\colspace l \colspace ||\colspace c \colspace |\colspace c
\colspace |\colspace c \colspace |\colspace c \colspace|}
\hline
Dataset & Type of data & \# Points ($n$) & \# Features ($d$) &  Kernel  \\
\hline
NIPS & bag of words & $1500$ &  $12419$ & linear \\ 
PIE & face images & $2731$ &  $2304$ & linear \\ 
MNIS & digit images & $4000$ &  $784$ & linear \\
Essential & proteins & $4728$ & $16$ & RBF \\
Abalone & abalones & $4177$ & $8$ & RBF \\
Dexter & bag of words & $2000$ & $20000$ &  linear \\
KIN-8nm & kinematics of robot arm & $2000$ & $8$ &  polynomial \\
\hline
\end{tabular}
\caption{Description of real datasets used in our coherence experiments,
including the type of data, the number of points ($n$), the number of features
($d$) and the choice of kernel \citep{Sim02,mnist,gustafson05,UCIdatasets}.}
\label{table:coherence_datasets}
\end{table*}

\begin{figure}[ht!]
\begin{center}
\begin{tabular} {@{}c@{}c@{}}
\ipsfig{.35}{figure=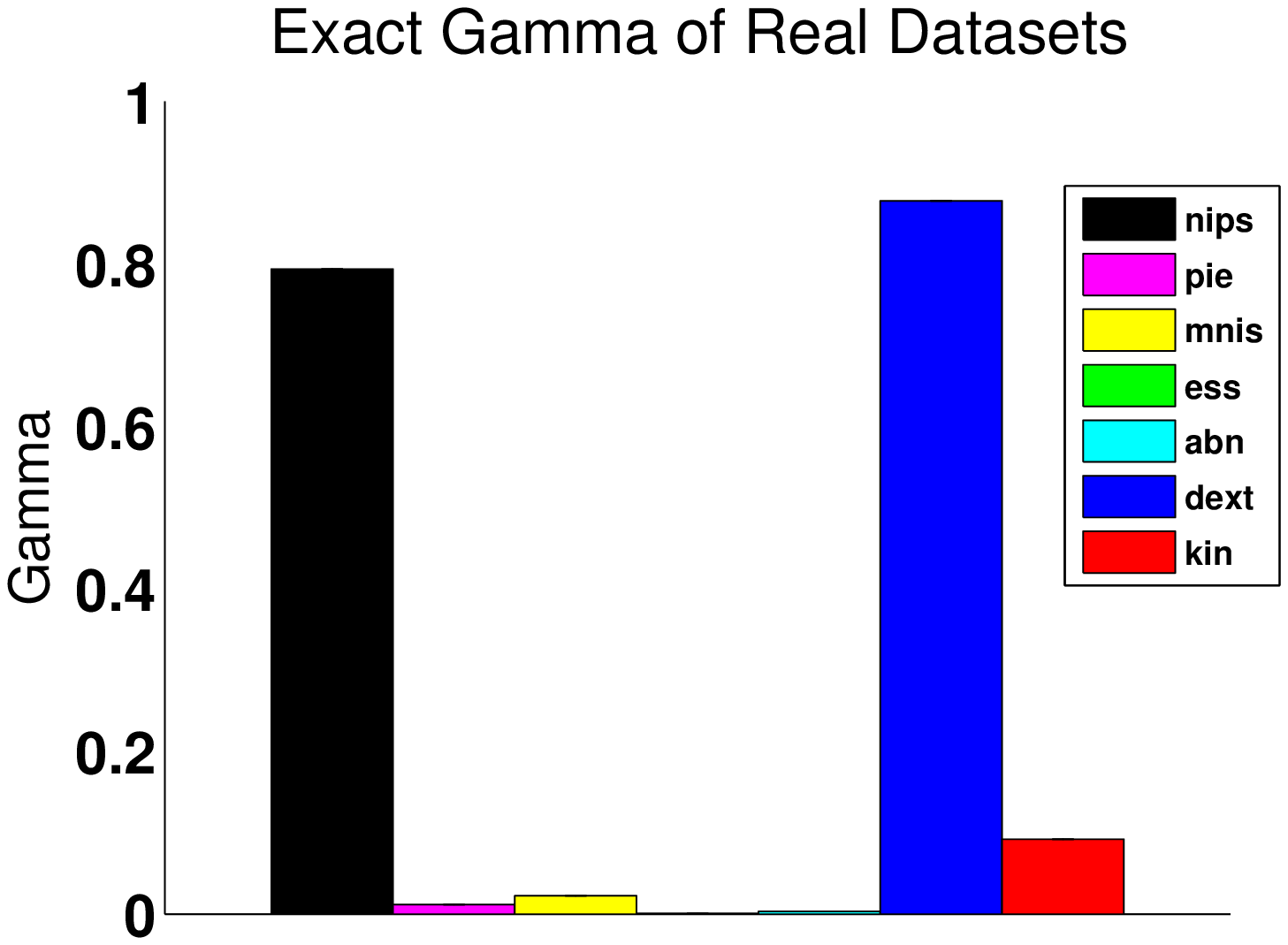} & 
\ipsfig{.35}{figure=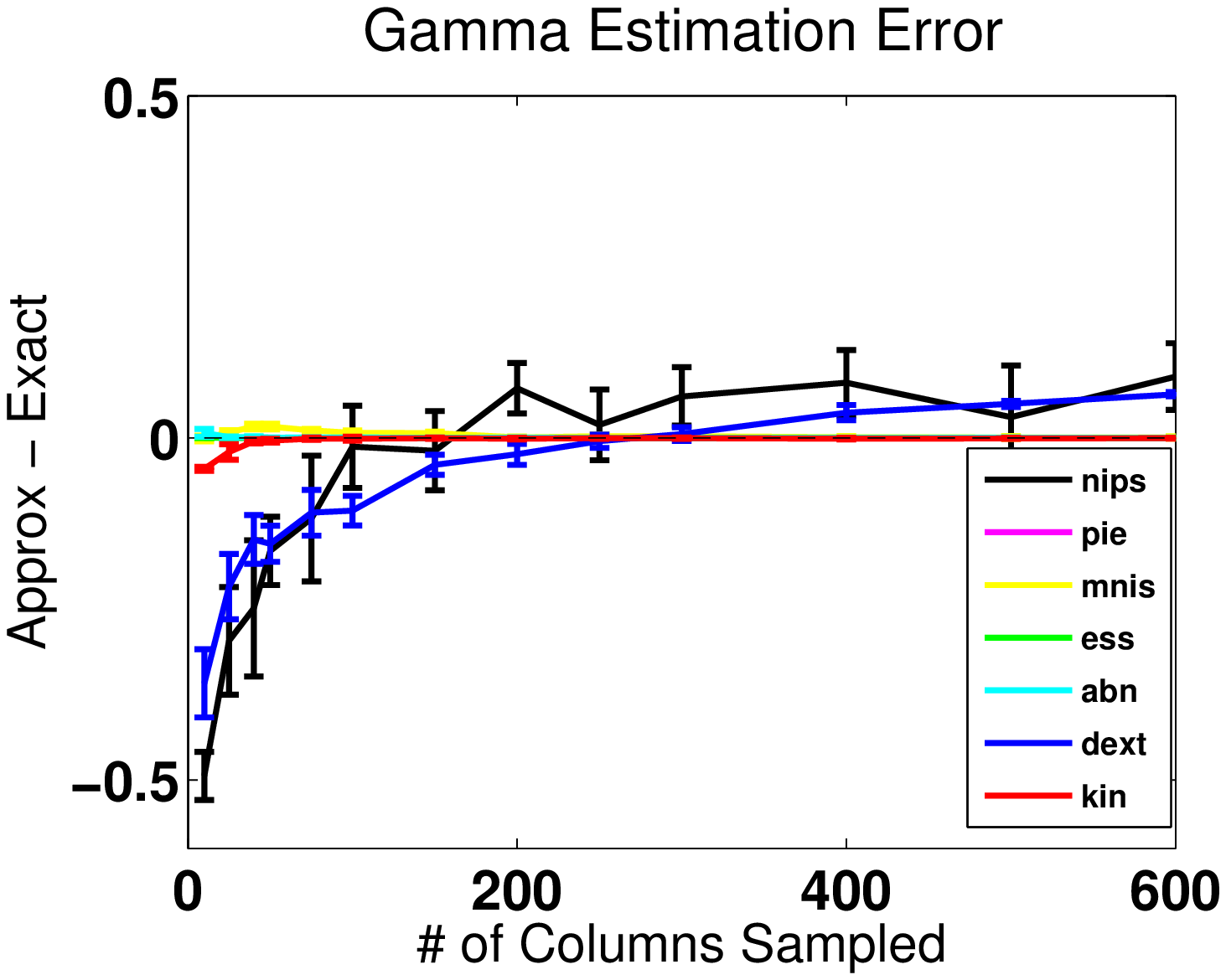} \\
(a) & (b) \\
\ipsfig{.35}{figure=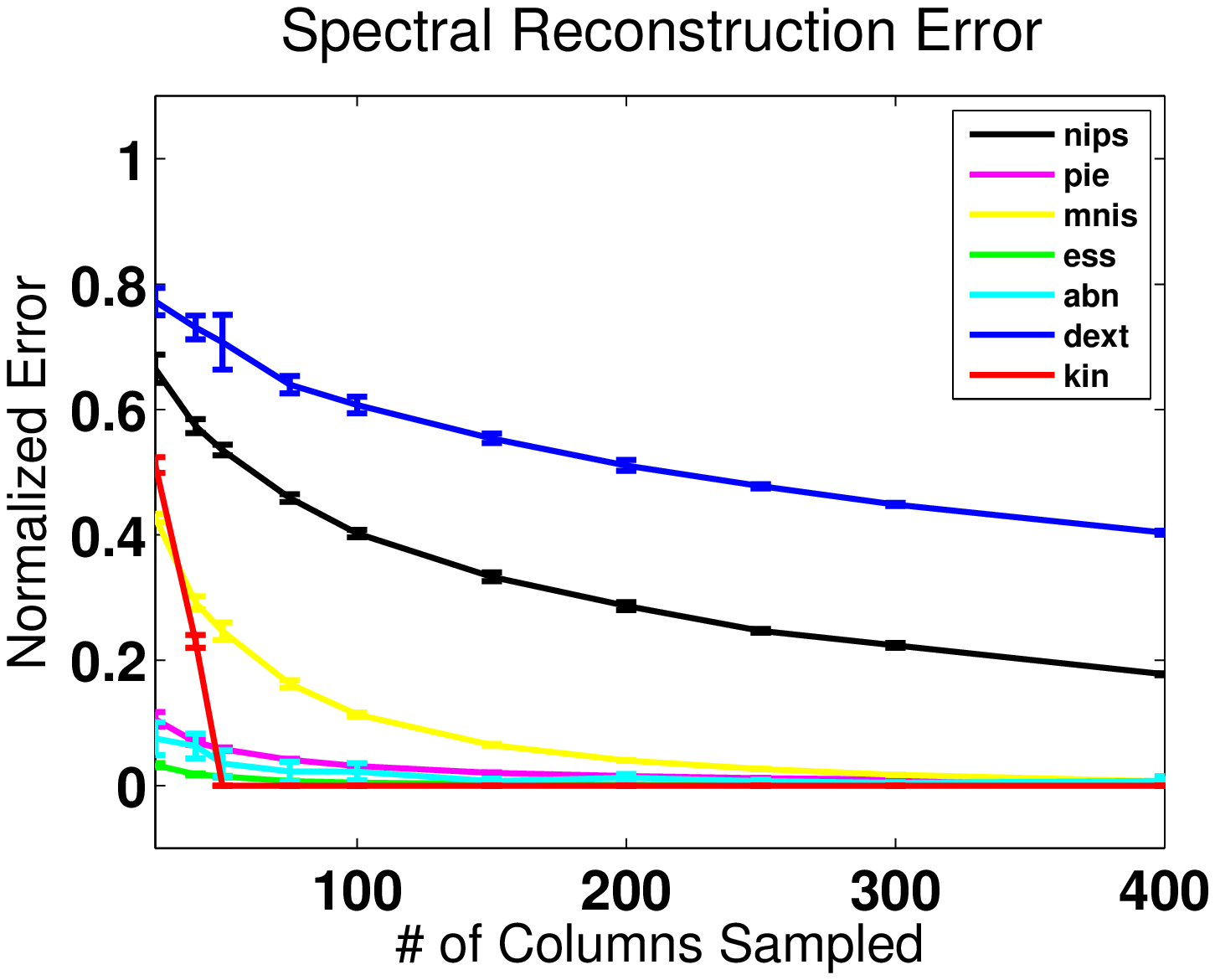} &
\ipsfig{.35}{figure=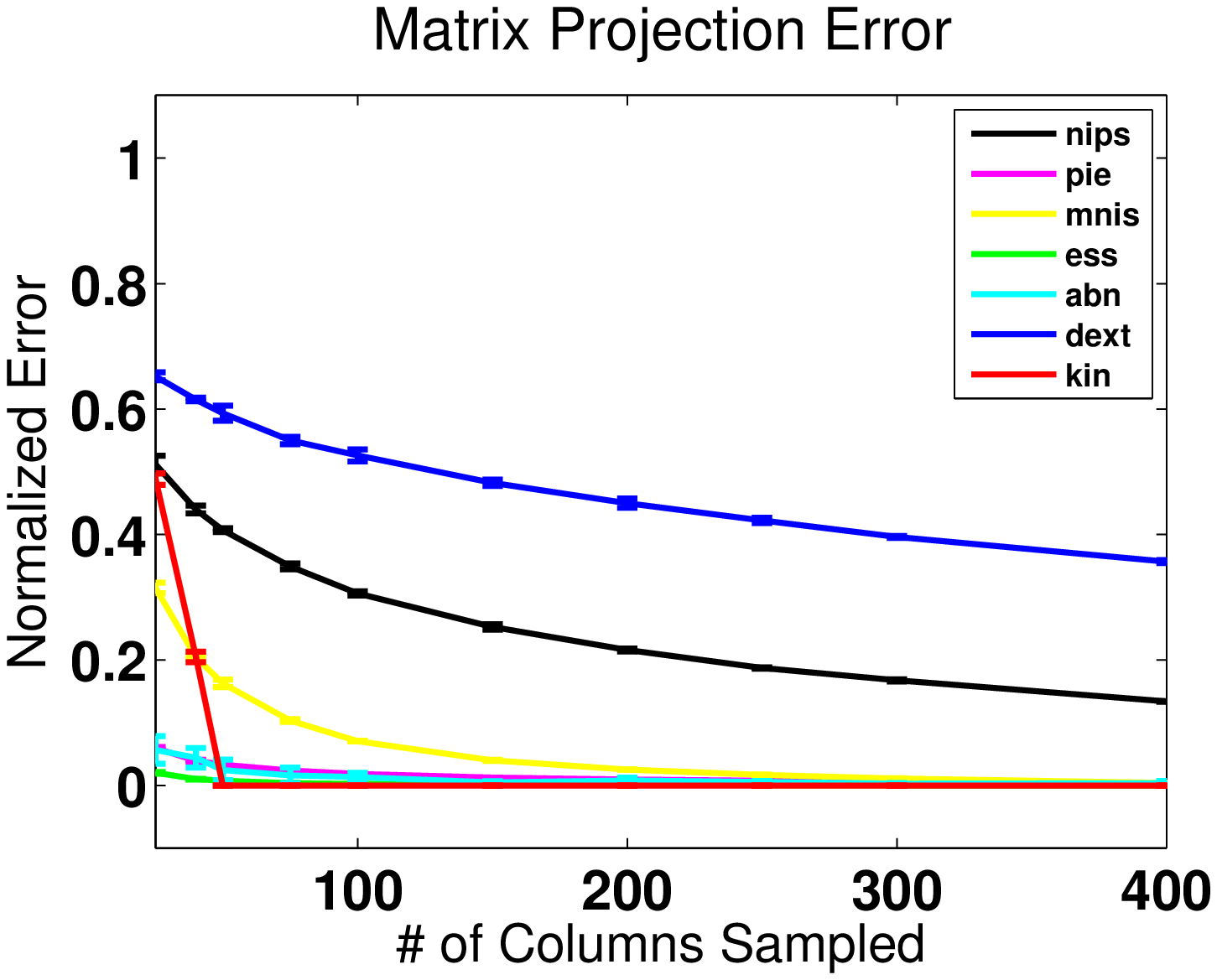} \\
(c) & (d) \\
\end{tabular}
\end{center}
\caption{Experiments with real data. (a) True coherence of each kernel
matrix $\K$. (b) Difference between the true coherence and the estimated
coherence. (c-d) Quality of two types of low-rank matrix approximations ($\tl
\K$), where `Normalized Error' equals $\norm{\K - \tl \K}_F / \norm{\K}_F$.}
\label{fig:real_data}
\end{figure}

\section{Conclusion}
We proposed a novel algorithm to estimate matrix coherence. Our theoretical
analysis shows that \textsc{Estimate-Coherence} provides good estimates for
relatively low-coherence matrices, and more generally, its effectiveness is
tied to coherence itself.  We corroborate this finding for high-coherence
matrices with an adversarially chosen dataset and sampling scheme.
Empirically, however, our algorithm efficiently and accurately estimates
coherence across a wide range of datasets, and these estimates are excellent
predictors of the effectiveness of sampling-based matrix approximation.  We
believe that our algorithm should be used whenever low-rank matrix
approximation is being considered to determine its applicability on a
case-by-case basis.  Moreover, the variance of coherence estimates across
multiple samples may provide further information, and the use of multiple
samples fits nicely in the framework of ensemble methods for low-rank
approximation, e.g., \citet{Kumar09c}. 
 
\small
\bibliographystyle{aaai-named} 
\bibliography{coh} 
\end{document}